\newtheorem{theorem}{Theorem}
\newtheorem{assumption}{Assumption}
\def\BibTeX{{\rm B\kern-.05em{\sc i\kern-.025em b}\kern-.08em
    T\kern-.1667em\lower.7ex\hbox{E}\kern-.125emX}}
\begin{document}
\makeatletter
\newcommand{\linebreakand}{%
  \end{@IEEEauthorhalign}
  \hfill\mbox{}\par
  \mbox{}\hfill\begin{@IEEEauthorhalign}
}
\makeatother

\title{FedUHB: Accelerating Federated Unlearning via Polyak Heavy Ball Method\\
}

\author{\IEEEauthorblockN{Yu Jiang}
\IEEEauthorblockA{\textit{Nanyang Technological University} \\
Singapore \\
yu012@e.ntu.edu.sg} \\
\and
\IEEEauthorblockN{Chee Wei Tan}
\IEEEauthorblockA{\textit{Nanyang Technological University} \\
Singapore\\
cheewei.tan@ntu.edu.sg} \\
\and
\IEEEauthorblockN{Kwok-Yan Lam}
\IEEEauthorblockA{\textit{Nanyang Technological University} \\
Singapore\\
kwokyan.lam@ntu.edu.sg}
}
\maketitle

\begin{abstract}

Federated learning facilitates collaborative machine learning, enabling multiple participants to collectively develop a shared model while preserving the privacy of individual data. The growing importance of the ``right to be forgotten" calls for effective mechanisms to facilitate data removal upon request. In response, federated unlearning (FU) has been developed to efficiently eliminate the influence of specific data from the model. Current FU methods primarily rely on approximate unlearning strategies, which seek to balance data removal efficacy with computational and communication costs, but often fail to completely erase data influence.
To address these limitations, we propose FedUHB, a novel exact unlearning approach that leverages the Polyak heavy ball optimization technique, a first-order method, to achieve rapid retraining. In addition, we introduce a dynamic stopping mechanism to optimize the termination of the unlearning process.
Our extensive experiments show that FedUHB not only enhances unlearning efficiency but also preserves robust model performance after unlearning. Furthermore, the dynamic stopping mechanism effectively reduces the number of unlearning iterations, conserving both computational and communication resources. FedUHB can be proved as an effective and efficient solution for exact data removal in federated learning settings.

\end{abstract}

\begin{IEEEkeywords}
Federated unlearning, Polyak heavy ball, first-order method
\end{IEEEkeywords}

\section{Introduction}
Federated learning (FL) \cite{ mcmahan2017federated, kairouz2021advances, boyd2004convex, liu2022privacy, li2020federated,zhang2024coded,liu2020accelerating,liu2022decentralized, boyd2011distributed} represents a significant advancement in collaborative machine learning, enabling multiple participants to jointly develop a shared model while maintaining the privacy of each participant's training data without the need to centralize sensitive information. However, with the increasing adoption of FL, there arises a crucial need to comply with data protection regulations such as the General Data Protection Regulation (GDPR) \cite{regulation2018general} and the California Consumer Privacy Act (CCPA) \cite{goldman2020introduction}. These regulations grant individuals specific rights regarding their personal data, including the ``right to be forgotten", which mandates the removal of an individual's data from the model upon request.

To address this requirement, the concept of federated unlearning (FU) has been developed to remove the influence of specific data from a model \cite{liu2021federaser, jiang2024towards, che2023fast, zhang2023fedrecovery, liu2024threats}. Current strategies in FU predominantly utilize approximate unlearning methods, which seek to find a balance between unlearning effectiveness and resource consumption, avoiding the need for complete model retraining. For instance, FedEraser \cite{liu2021federaser} calibrates historical gradients that the server intermittently collects and stores during FL rounds to erase the influence of removed data. FedAF \cite{li2023federated} replaces data targeted for removal with synthetic data that mimics the statistical properties of the unlearned data, which helps the model to “forget” the original data by diluting its influence. The method in \cite{wu2022federated} removes a client’s contribution by subtracting historical updates and restores model performance using knowledge distillation.
However, these approximate unlearning methods often have limitations due to the intrinsic properties of knowledge permeability in FL \cite{liu2023survey}, where the influence of specific data points can be diffused but not completely eliminated from the model. This incomplete data removal can result in residual data influence persisting within the model’s parameters, posing significant privacy risks.

\begin{figure*}[t]
    \centering
   \includegraphics[width=\linewidth]{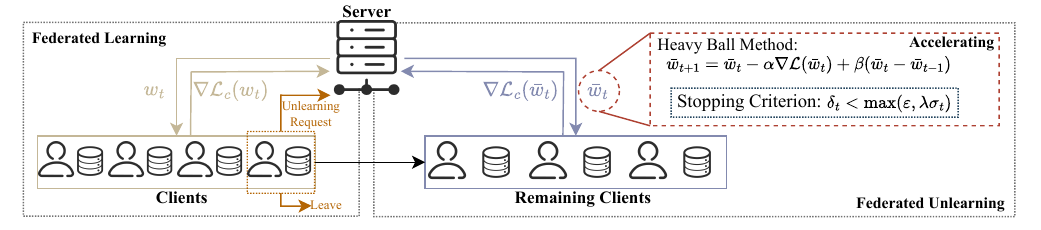}
    \caption{An illustration of FedUHB Scheme. FedUHB is an exact unlearning method that accelerates convergence using Polyak heavy ball optimization and features a dynamic stopping mechanism to optimize the termination of the unlearning process.}
    \label{fig:Workflow}
\end{figure*}

To address these challenges, we adopt an exact unlearning method, which aims to fully eliminate the influence of specified data from the model. The most straightforward approach for exact unlearning is the retrain method, which involves training the model from scratch after removing the unlearned data. However, this method is time-consuming and resource-intensive. Hence, we propose an innovative rapid retraining strategy that leverages the Polyak heavy ball optimization technique \cite{polyak1964some, chen2022communication, ghadimi2015global}, a first-order method, to accelerate model convergence \cite{beck2017first, ryu2022large, nesterov2013gradient, parikh2014proximal,zhu2023federated}.
While the Polyak heavy ball method is seldom used in FL due to its requirement to store historical information from previous rounds, which increases storage and communication overhead, our approach can leverage its momentum feature to accelerate model convergence and enhance stability during the unlearning process. This method incurs only one additional round of storage cost, significantly reducing overall storage and communication expenses compared to methods that rely on extensive historical information for calibration \cite{liu2021federaser,cao2023fedrecover}. By integrating this method into our federated unlearning framework, we enable more precise and faster data removal, offering a practical and robust solution to achieve exact data removal in FL environments. Additionally, we have designed a dynamic stopping mechanism to assist the server in terminating the training process at the appropriate time by tracking the standard deviation of model changes to set precise stopping criteria, ensuring optimal model performance while minimizing unnecessary iterations.

To verify our approach, we conduct comprehensive experiments to evaluate the performance of FedUHB across various settings. Our findings indicate that FedUHB not only accelerates the unlearning process but also maintains strong model performance after unlearning. 

\textbf{Our Contributions.} The main contributions of this work are as follows.
\begin{enumerate}
\item  We introduce FedUHB, an exact unlearning method that incorporates a rapid retraining strategy utilizing the Polyak heavy ball optimization technique, facilitating efficient and effective data removal.
\item We design a dynamic stopping mechanism to optimize the unlearning process by tracking the standard deviation of model changes to set precise stopping criteria.
\item Extensive experiments demonstrate that FedUHB achieves fast convergence and effectively ensures exact data removal.
\end{enumerate}

\section{FedUHB Scheme}
In this section, we provide a detailed description of the FedUHB scheme, from federated learning to federated unlearning. We explain how Polyak heavy ball optimization technique is integrated and how a stopping mechanism is designed during the federated unlearning process. The complete workflow is illustrated in Fig. \ref{fig:Workflow}.
\subsection{Federated Learning}
In the federated learning setting, the system consists of a central server $S$ and a total of $C$ clients. The server's role is to coordinate the training of a global model $w$ utilizing datasets distributed among the clients. Each client $c \in \mathcal{C}$ possesses a dataset $D_c$, collectively forming the complete dataset $D$. The objective is to minimize the loss function, $\min_w \mathcal{L}(D;w) = \min_w \sum_{c=1}^C \mathcal{L}(D_c;w)$, where $\mathcal{L}$ typically represents an empirical loss function, such as cross-entropy loss. For simplicity, we denote $\mathcal{L}_c(w)=\mathcal{L}(D_c;w)$.
Training occurs in iterative rounds. Specifically, in the $t$-th round, the server broadcasts the current global model $w_t$ to the clients. The clients then independently train on their datasets, calculating updates $\nabla \mathcal{L}_c(w_t)$ using stochastic gradient descent. These updates are sent back to the server, which are aggregated by a specific rule $\mathcal{A}$ to refine the global model \cite{mcmahan2017communication, liu2022efficient,liu2023long,liu2024dynamic}. The updated global model $w_{t+1} = w_t - \eta \mathcal{A}(\nabla \mathcal{L}_1(w_t), \nabla \mathcal{L}_2(w_t), \cdots, \nabla \mathcal{L}_C(w_t))$ is then sent back to the clients for further training in the subsequent rounds, with $\eta$ representing the learning rate. This process is repeated until the training meets a specified convergence criterion or achieves the intended training objective, resulting in the final model. 

\subsection{Federated Unlearning}
In the FedUHB approach, the complete workflow for handling unlearning is a meticulously designed process. When the target client $c \in \mathcal{C}_u$ decides to unlearn the data, they initiate an unlearning request, which results in the immediate withdrawal of their data from the training set.

Following the withdrawal, the server reorganizes the remaining clients $c \in \mathcal{C}_r$ for the unlearning process, where $\mathcal{C}_r = \mathcal{C} \setminus \mathcal{C}_u$. At the $t$-th round of the unlearning stage, the server broadcasts the current model state, denoted as $\bar w_t$, to all remaining clients in $\mathcal{C}_r$. Each remaining client then computes a local update $\nabla \mathcal{L}_c(\bar w_t)$ using their own datasets. These updates are calculated based on a loss function that is the same as that used in standard federated learning and are subsequently sent back to the server.
After collecting these local updates, the server aggregates them via a specific rule $\mathcal{A}$: $\nabla \mathcal{L}(\bar w_t) =  \mathcal{A}(\nabla \mathcal{L}_1(\bar w_t), \nabla \mathcal{L}_2(\bar w_t), \cdots, \nabla \mathcal{L}_{|C_r|}(\bar w_t))$, which is similar to the aggregation used in the federated learning.


\textbf{Accelerating unlearning.}
To accelerate the unlearning, the server adopts the Polyak heavy ball method to update models. 
In FL, the Polyak heavy ball method is rarely used due to the need to store historical information from previous rounds, which increases storage and communication overhead. In contrast, FU benefits from the momentum feature of the Polyak heavy ball method, allowing for faster model convergence and improved stability during unlearning, while only incurring an additional round of storage cost.
Specifically, the unlearned model is updated using the Polyak heavy ball method as follows:
\begin{equation}
\bar w_{t+1} = \bar w_t -\alpha \nabla \mathcal{L}(\bar w_t) + \beta (\bar w_t - \bar w_{t-1}),
\end{equation}
where $\alpha$ represents the step size and $\beta \in (0,1)$ denotes the momentum, contributing to the acceleration of convergence by integrating the influence of the previous update’s direction.

\textbf{Setting dynamic stopping criterion.}
In the FedUHB method, a monitoring mechanism is implemented to optimize the timing of training cessation, which ensures that the model achieves optimal performance while reducing unnecessary training iterations. 
Specifically, we establish a dynamic stopping criterion based on the standard deviation. Let $ \delta_t $ represent the norm of the weight change in round $ t $:
\begin{equation}
\delta_t = \|\beta (\bar w_t - \bar w_{t-1})\|_2,
\end{equation}
which is tracked over multiple rounds to compute its standard deviation:
\begin{equation}
\sigma_t = \sqrt{\frac{1}{k} \sum_{j=t-k+1}^t (\delta_j - \overline{\delta}_{t,k})^2},
\end{equation}
where $ \overline{\delta}_{t,k} $ is the average of the weight changes over the last $ k $ rounds:
\begin{equation}
\overline{\delta}_{t,k} = \frac{1}{k} \sum_{j=t-k+1}^t \delta_j.
\end{equation}
The stopping criterion is thus defined based on the standard deviation of the weight changes from the recent $ k $ rounds. Training should cease if the weight change in the latest round \( \delta_t \) is less than a multiple \( \lambda \) of the standard deviation or if it falls below a minimum threshold. Specifically, training stops if \( \delta_t < \max(\varepsilon, \lambda \sigma_t) \), where $\varepsilon$ is the minimum threshold.
In summary, this method optimizes the training process by dynamically monitoring weight changes, effectively balancing performance and efficiency, and ensuring the model stops training at the appropriate time for optimal results. The unlearning algorithm is shown in Algorithm \ref{alg:afu}.

\begin{algorithm}[t]
\SetAlgoNoEnd
\caption{FedUHB Scheme}
\label{alg:afu}
\KwIn{Remaining client $c \in \mathcal{C}_r$ with dataset $D_c$, initial model $\bar w_0$, stopping criterion $\lambda$, step size $\alpha$, momentum $\beta$, minimum threshold $\varepsilon$}
\SetKwFunction{FU}{\textbf{Accelerating Unlearning}}
\SetKwProg{Fn}{}{:}{\KwRet}
\Fn{\FU}{
\textbf{Server executes:} \\
Reinitialize the global model $\bar w_0$; \\
Send global model $\bar w_t$ to all client; \\
\textbf{Remaining client executes:} \\
\For{all remaining client $c_r$}{
Compute and select update to obtain $\nabla \mathcal{L}_c(\bar w_t)$;
}
\textbf{Server executes:} \\
Aggregate updates: $\nabla \mathcal{L}(\bar w_t) \leftarrow \mathcal{A}(\nabla \mathcal{L}_1(\bar w_t), \nabla \mathcal{L}_2(\bar w_t), \cdots, \nabla \mathcal{L}_{|C_r|}(\bar w_t))$;\\
Update model: $\bar w_{t+1} \leftarrow \bar w_t -\alpha \nabla \mathcal{L}(\bar w_t) + \beta (\bar w_t - \bar w_{t-1})$;\\
Compute weight change: $\delta_t = \|\beta (\bar w_t - \bar w_{t-1})\|_2$;\\
Compute standard deviation: $\sigma_t = \sqrt{\frac{1}{k} \sum_{j=t-k+1}^t (\delta_j - \overline{\delta}_{t,k})^2}$;\\
Decide to stop: \\
\If{ \( \delta_t < \max(\varepsilon, \lambda \sigma_t) \)}{
$\bar w =\bar w_t$ }
}
\KwRet{$\bar w$}
\end{algorithm}

\section{Theoretic Analysis}

In this section,  we first outline the assumptions for our theoretical analysis and then analyze the model difference between FedUHB and the retrain method.

\begin{assumption}\label{ass:convex}
The function \( \mathcal{L}(w) \) satisfies the strong convexity condition with constant \( \mu > 0 \), which can be expressed as:
\[
\mathcal{L}(w_2) \geq \mathcal{L}(w_1) + \langle \nabla \mathcal{L}(w_1), w_2 - w_1 \rangle + \frac{\mu}{2} \|w_2 - w_1\|_2^2
\]
for all vectors \( w_1 \) and \( w_2 \).
\end{assumption}

\begin{assumption}\label{ass:smooth}
The gradient of \( \mathcal{L}(w) \) is Lipschitz continuous with a Lipschitz constant \( L \). This implies that for any vectors \( w_1 \) and \( w_2 \), the function satisfies:
\[
\|\nabla \mathcal{L}(w_1) - \nabla \mathcal{L}(w_2)\|_2 \leq L \|w_1 - w_2\|_2.
\]
\end{assumption}

\begin{assumption}\label{ass:difference}
    The stochastic gradient $\nabla \mathcal{L}_c(w)$ has upper bound $G$ and is an unbiased estimator of loss function $\mathcal{L}(w)$:
    \begin{equation}
        \Vert \nabla \mathcal{L}_c(w) \Vert_2 \leq G, \nabla \mathcal{L}(w) = \mathbb{E}\{\nabla \mathcal{L}_c(w)\}.
    \end{equation}
\end{assumption}

\begin{theorem}
The upper bound of the model difference between FedUHB and retrain after $t$ rounds can be expressed as:
\begin{equation}
\begin{aligned}
        &\Vert \bar{w}_{t} - \hat{w}_{t} \Vert_2 \\
    \leq& (\sqrt{1- \alpha \mu})^t \|\bar{w}_{0} - \hat{w}_{0}\|_2 + \frac{\alpha G[1-(\sqrt{1- \alpha \mu})^t]}{(1 - \beta)(1-\sqrt{1- \alpha \mu})},
\end{aligned}
\end{equation}
where $\bar w_{t}$ and $\hat{w}_{t}$ are the global models unlearned by FedUHB and the retrain method after $t$ rounds, respectively.
\end{theorem}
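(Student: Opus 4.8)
The plan is to control the error sequence $e_t := \bar w_t - \hat w_t$ by showing it satisfies a one-step contraction $\|e_{t+1}\|_2 \le \sqrt{1-\alpha\mu}\,\|e_t\|_2 + \frac{\alpha G}{1-\beta}$, and then to unroll this linear recursion; the closed form then drops out because $\sum_{j=0}^{t-1}(\sqrt{1-\alpha\mu})^{\,j} = \frac{1-(\sqrt{1-\alpha\mu})^{t}}{1-\sqrt{1-\alpha\mu}}$, which is exactly the second term in the claimed bound, the first term being the $(\sqrt{1-\alpha\mu})^t\|e_0\|_2$ carry-over. First I would write the two updates together: FedUHB uses $\bar w_{t+1} = \bar w_t - \alpha\nabla\mathcal{L}(\bar w_t) + \beta(\bar w_t - \bar w_{t-1})$ with $\nabla\mathcal{L}(\bar w_t)$ the aggregated, unbiased, $G$-bounded stochastic gradient of Assumption~\ref{ass:difference}, whereas retrain performs the plain gradient step $\hat w_{t+1} = \hat w_t - \alpha\nabla\mathcal{L}(\hat w_t)$ on the same remaining-client loss. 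Subtracting and regrouping,
\begin{equation*}
e_{t+1} = \bigl[(\bar w_t - \alpha\nabla\mathcal{L}(\bar w_t)) - (\hat w_t - \alpha\nabla\mathcal{L}(\hat w_t))\bigr] - \alpha\,\xi_t + \beta(\bar w_t - \bar w_{t-1}),
\end{equation*}
where $\xi_t$ is the zero-mean deviation of the aggregated stochastic gradient from $\nabla\mathcal{L}(\bar w_t)$.

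Next I would bound the three pieces. For the bracketed term I would use the classical fact that, under strong convexity (Assumption~\ref{ass:convex}) and Lipschitz smoothness (Assumption~\ref{ass:smooth}), the gradient map is a contraction: expanding $\|(w-\alpha\nabla\mathcal{L}(w)) - (w'-\alpha\nabla\mathcal{L}(w'))\|_2^2$, lower-bounding the cross term by $\mu\|w-w'\|_2^2$ via strong convexity and upper-bounding the squared-gradient term via $L$-smoothness, and then taking $\alpha$ small enough (e.g.\ $\alpha \le 1/L$), gives the factor $\sqrt{1-\alpha\mu}$ in front of $\|e_t\|_2$. The stochastic term is handled by $\|\xi_t\|_2 \le G$ from Assumption~\ref{ass:difference}. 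For the momentum term I would iterate $\bar w_{t}-\bar w_{t-1} = -\alpha\nabla\mathcal{L}(\bar w_{t-1}) + \beta(\bar w_{t-1}-\bar w_{t-2})$ with the convention $\bar w_0 = \bar w_{-1}$, which telescopes to $\bar w_t - \bar w_{t-1} = -\alpha\sum_{j=0}^{t-1}\beta^{\,j}\nabla\mathcal{L}(\bar w_{t-1-j})$; bounding each aggregated gradient by $G$ and summing the geometric series gives $\|\beta(\bar w_t-\bar w_{t-1})\|_2 \le \alpha G\sum_{j\ge 1}\beta^{\,j} \le \frac{\alpha G\beta}{1-\beta}$. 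This geometric sum is precisely where the $(1-\beta)^{-1}$ factor in the theorem comes from. Adding the two constant contributions, $\alpha G + \frac{\alpha G\beta}{1-\beta} = \frac{\alpha G}{1-\beta}$, yields the advertised one-step inequality, and a straightforward induction on $t$ then gives the stated bound.

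The step I expect to be the main obstacle is the momentum term, because the heavy-ball rule makes the error recursion genuinely second order --- $e_{t+1}$ depends on both $e_t$ and $e_{t-1}$ --- so some care is needed to collapse it back to a first-order contraction. The telescoping bound on $\bar w_t - \bar w_{t-1}$ above is the quickest route; a cleaner alternative, useful if one instead compares against a retrain baseline that itself uses momentum, is to introduce the auxiliary difference $d_t := e_t - e_{t-1}$, show $d_{t+1} = -\alpha\bigl(\nabla\mathcal{L}(\bar w_t)-\nabla\mathcal{L}(\hat w_t)\bigr) - \alpha\xi_t + \beta d_t$, and bound $\|d_t\|_2$ uniformly via the same geometric series. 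A secondary technical point is pinning down the exact step-size regime under which the contraction constant is the stated $\sqrt{1-\alpha\mu}$ (rather than the coarser $\max\{|1-\alpha\mu|,|1-\alpha L|\}$), together with a clean treatment of the stochasticity --- either a deterministic bound on $\|\xi_t\|_2$ or passing to expectations using the unbiasedness in Assumption~\ref{ass:difference}.
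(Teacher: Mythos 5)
Your proposal is correct and follows essentially the same route as the paper's proof: subtract the FedUHB and retrain updates, show the gradient-descent part contracts with factor $\sqrt{1-\alpha\mu}$ when $0<\alpha L\le 1$, bound the heavy-ball displacement through the geometric series $\|\bar w_t-\bar w_{t-1}\|_2\le \frac{\alpha G}{1-\beta}$ using Assumption~\ref{ass:difference}, and unroll the resulting first-order recursion $\|\bar w_{t+1}-\hat w_{t+1}\|_2\le\sqrt{1-\alpha\mu}\,\|\bar w_t-\hat w_t\|_2+\frac{\alpha G}{1-\beta}$. The one detail to tighten is the point you already flagged: to obtain the constant $\sqrt{1-\alpha\mu}$ under $\alpha\le 1/L$ you must absorb the $\alpha^2\|\nabla\mathcal{L}(\bar w_t)-\nabla\mathcal{L}(\hat w_t)\|_2^2$ term via co-coercivity, $\langle \nabla\mathcal{L}(\bar w_t)-\nabla\mathcal{L}(\hat w_t),\,\bar w_t-\hat w_t\rangle\ge\frac{1}{L}\|\nabla\mathcal{L}(\bar w_t)-\nabla\mathcal{L}(\hat w_t)\|_2^2$ (this is exactly the paper's $(\alpha^2-\frac{\alpha}{L})$ term), since the cruder smoothness bound $\|\nabla\mathcal{L}(\bar w_t)-\nabla\mathcal{L}(\hat w_t)\|_2\le L\|\bar w_t-\hat w_t\|_2$ would only give the factor under the stricter step size $\alpha\le\mu/L^2$.
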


\begin{proof}
Recall that the global model unlearned by FedUHB is updated according to the following rule:
\(\bar{w}_{t+1} = \bar{w}_t - \alpha \nabla \mathcal{L}(\bar{w}_t) + \beta (\bar{w}_t - \bar{w}_{t-1})\).
Let \(\hat{w}\) denote the global model that has been unlearned using the retrain method. The update for the model can then be expressed as \(\hat{w}_{t+1} = \hat{w}_t - \alpha \nabla \mathcal{L}(\hat{w}_t)\), where \(\alpha\) is the learning rate.

Next, we analyze the difference bound between FedUHB and the retrain method:
\begin{equation}
\begin{aligned}
    &\Vert \bar{w}_{t+1} - \hat{w}_{t+1} \Vert_2 \\
    =& \Vert (\bar{w}_t - \alpha \nabla \mathcal{L}(\bar{w}_t) + \beta (\bar{w}_t - \bar{w}_{t-1})) - (\hat{w}_t - \alpha \nabla \mathcal{L}(\hat{w}_t)) \Vert_2   \\
    \leq & \Vert (\bar{w}_{t} - \hat{w}_{t}) - \alpha (\nabla \mathcal{L}(\bar{w}_t) - \nabla \mathcal{L}(\hat{w}_t)) \Vert_2 + \beta \Vert \bar{w}_t - \bar{w}_{t-1}\Vert_2.
\end{aligned}
\end{equation}
For simplicity, we denote $\mathcal{X} = \Vert (\bar{w}_{t} - \hat{w}_{t}) - \alpha (\nabla \mathcal{L}(\bar{w}_t) - \nabla \mathcal{L}(\hat{w}_t)) \Vert_2$. Then, we calculate $\mathcal{X}^2$:
$\mathcal{X}^2 = \|\bar{w}_{t} - \hat{w}_{t}\|_2^2 + \alpha^2 \|\nabla \mathcal{L}(\bar{w}_t) - \nabla \mathcal{L}(\hat{w}_t)\|_2^2  - 2\alpha \langle \bar{w}_{t} - \hat{w}_{t}, \nabla \mathcal{L}(\bar{w}_t) - \nabla \mathcal{L}(\hat{w}_t) \rangle.$
According to Assumption \ref{ass:convex} and Assumption \ref{ass:smooth}, we can obtain:
\begin{equation}
        \mathcal{X}^2 \leq (1- \alpha \mu)\|\bar{w}_{t} - \hat{w}_{t}\|_2^2 + (\alpha^2 - \frac{\alpha}{L}) \|\nabla \mathcal{L}(\bar{w}_t) - \nabla \mathcal{L}(\hat{w}_t)\|_2^2.
\end{equation}
When $0 < \alpha L \leq 1$, we can obtain the following inequality: $\mathcal{X} \leq \sqrt{1- \alpha \mu} \|\bar{w}_{t} - \hat{w}_{t}\|_2$.
Then, we need to bound the \(\|\bar{w}_t - \bar{w}_{t-1}\|_2\) according to the Assumption \ref{ass:difference}:
\begin{equation}
\begin{aligned}
    \|\bar{w}_t - \bar{w}_{t-1}\|_2 
    &\leq \alpha \|\nabla \mathcal{L}(\bar{w}_{t-1})\|_2 + \beta \|\bar{w}_{t-1} - \bar{w}_{t-2}\|_2\\
    &\leq \alpha G + \beta \|\bar{w}_{t-1} - \bar{w}_{t-2}\|_2.
\end{aligned}
\end{equation}
To expand iteratively, we can derive that $\|\bar{w}_t - \bar{w}_{t-1}\|_2 \leq \alpha G \frac{1 - \beta^{t+1}}{1 - \beta}$. Since $0<\beta<1$, we can get $\|\bar{w}_t - \bar{w}_{t-1}\|_2 \leq \frac{\alpha G}{1 - \beta}$. Combined with $\mathcal{X} \leq \sqrt{1- \alpha \mu} \|\bar{w}_{t} - \hat{w}_{t}\|_2$, we can get
\begin{equation} \label{eq:difference}
        \Vert \bar{w}_{t+1} - \hat{w}_{t+1} \Vert_2
        \leq \sqrt{1- \alpha \mu} \|\bar{w}_{t} - \hat{w}_{t}\|_2 + \frac{\alpha G}{1 - \beta}.
\end{equation}
By applying Eq. \ref{eq:difference} recursively, we can have the following bound for any $t \geq 0$:
\begin{equation}
    \Vert \bar{w}_{t} - \hat{w}_{t} \Vert_2
    \leq (\sqrt{1- \alpha \mu})^t \|\bar{w}_{0} - \hat{w}_{0}\|_2 + \frac{\alpha G[1-(\sqrt{1- \alpha \mu})^t]}{(1 - \beta)(1-\sqrt{1- \alpha \mu})},
\end{equation}
where $\bar w_{t}$ and $\hat{w}_{t}$ are the global models unlearned by FedUHB and the retrain method after $t$ rounds, respectively.

\end{proof}
The theorem establishes an upper bound on the model difference between the FedUHB method and the retrain approach after $t$ rounds, demonstrating the unlearning effectiveness of FedUHB.
Besides, a higher \(\beta\) leads to a larger bound on the model difference, while a lower \(\beta\) reduces the difference between FedUHB and the retrain method.
When $0 < \alpha L \leq 1$, the difference stabilizes at 
$\lim_{t \to \infty} \|\bar{w}_{t} - \hat{w}_{t}\|_2 = \frac{\alpha G}{(1 - \beta)(1 - \sqrt{1 - \alpha \mu})}$ as \( t \to \infty \), highlighting the trade-off between convergence speed and unlearning effectiveness.

\section{Experiment}
\subsection{Evaluation Setup}
\subsubsection{Compared methods}

We assess the performance of FedUHB through a comparative analysis with three unlearning strategies: 
(i) Retrain,
(ii) FedRecover \cite{cao2023fedrecover}, which recovers the global model by estimating client updates using historical data and the L-BFGS algorithm (to focus on L-BFGS recovery performance, we did not apply the periodic correction described in FedRecover), and
(iii) FedEraser \cite{liu2021federaser}, which calibrates historical data selected at predefined intervals to enable unlearning.

\subsubsection{Implementation details}


In our simulated FL environment comprising 20 clients, we adopt FedAvg \cite{mcmahan2017communication} as the model aggregation algorithm. The experimental setup includes 5 local training epochs, 40 global training epochs, a learning rate of 0.005, a batch size of 64, a momentum parameter of \(\beta = 0.9\), a stopping criterion of \(\lambda = 0.6\), and 2 target clients requesting unlearning.

\subsubsection{Evaluation metrics}
We start by using standard metrics such as accuracy and loss on the test dataset to evaluate unlearning efficiency. 
Then, we evaluate unlearning impact through two types of attacks: Membership Inference Attack (MIA) \cite{shokri2017membership} and Backdoor Attack (BA) \cite{liu2020reflection}. Membership Inference Success Rate (MISR) assesses unlearning by using a shadow model trained on logits from a compromised global model to distinguish target clients' training data from test data. After unlearning, the shadow model should classify target clients' data as 0. An MISR close to 0.5 suggests successful unlearning. Attack Success Rate (ASR) measures the effectiveness of backdoor unlearning by calculating the proportion of predictions matching the target label when backdoor triggers are used. A lower ASR implies better unlearning effectiveness.

\subsubsection{Datasets and models}
We evaluate the methods using the MNIST \cite{deng2012mnist} and CIFAR-10 \cite{krizhevsky2009learning} datasets. For MNIST, we employ a CNN architecture with 2 convolutional layers, ReLU activation, max pooling, and 2 fully connected layers. For CIFAR-10, we adjust the CNN to 3 input channels and expand the fully connected layers to 1600 units.


\begin{table}[t]
\centering
\caption{Running time (sec) of each unlearning round}
\label{tab:running time}
\resizebox{0.9\columnwidth}{!}{%
\begin{tabular}{@{}ccccc@{}}
\toprule
              & Retrain & FedEraser & FedRecover & \textbf{FedUHB} \\ \midrule
MNIST         & 1.26     & 1.47     & 2.00       & \textbf{0.77}  \\
CIFAR-10      & 16.89    & 18.58    & 20.01      & \textbf{12.51} \\ \bottomrule
\end{tabular}%
}
\end{table}

\begin{figure}[tbp]
	\centering
	\subfigure[Test loss on MNIST]{
		\begin{minipage}{0.42\columnwidth} 
            \includegraphics[width=\textwidth]{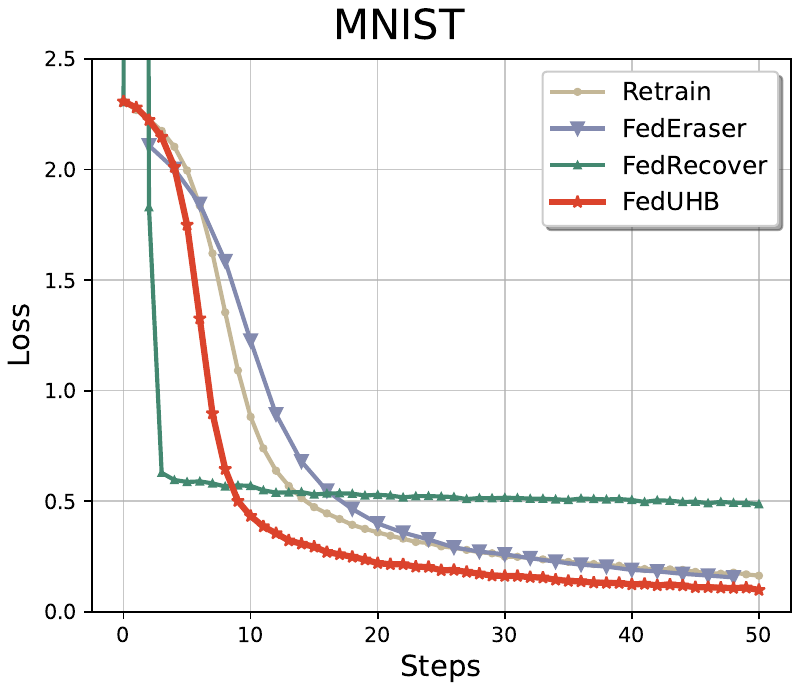} \\
            \label{fig:loss on mnist}
		\end{minipage}
	}
	\subfigure[Test loss on CIFAR-10]{
		\begin{minipage}{0.42\columnwidth}
			\includegraphics[width=\textwidth]{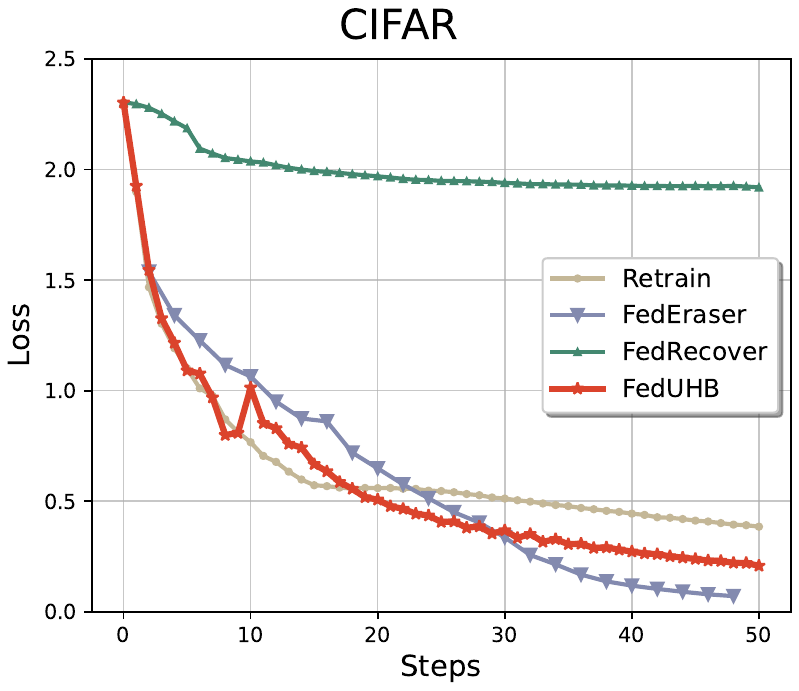}\\
			\label{fig:loss on cifar10}
		\end{minipage}
	}
	\caption{Loss curve of the unlearning on MNIST and CIFAR-10}
    \label{fig:converge}
\end{figure}
\subsection{Evaluation Results and Analysis}

\subsubsection{Running time}
We compare the time required per unlearning round for four methods across two datasets, as shown in Table \ref{tab:running time}. 
On the MNIST dataset, FedUHB demonstrates the shortest runtime, outperforming the retrain method by 0.5 seconds. FedEraser, which calibrates historical information for approximate unlearning, takes 0.7 seconds longer per round than FedUHB. Similarly, FedRecover, which uses the L-BFGS algorithm for unlearning, is slower in practice than our method using Polyak heavy ball optimization, requiring more than twice the time per round compared to FedUHB. On the CIFAR-10 dataset, FedUHB’s runtime is significantly faster than the other three methods, being 4 seconds quicker than the retrain method and 7.5 seconds faster than FedRecover.



\subsubsection{Convergence rate of unlearning}
We compare the convergence performance of four methods on two datasets, as shown in Figure \ref{fig:converge}. 
On the MNIST dataset, both FedEraser and the retrain method exhibit slow convergence, while FedRecover converges more quickly but falls short of optimal performance. In contrast, FedUHB achieves satisfactory convergence within approximately 20 rounds. On the CIFAR-10 dataset, FedRecover’s convergence performance is notably weaker compared to its performance on MNIST. However, FedUHB continues to demonstrate the ideal performance.


\begin{figure}[tbp]
	\centering
	\subfigure[Test accuracy on MNIST]{
		\begin{minipage}{0.42\columnwidth} 
            \includegraphics[width=\textwidth]{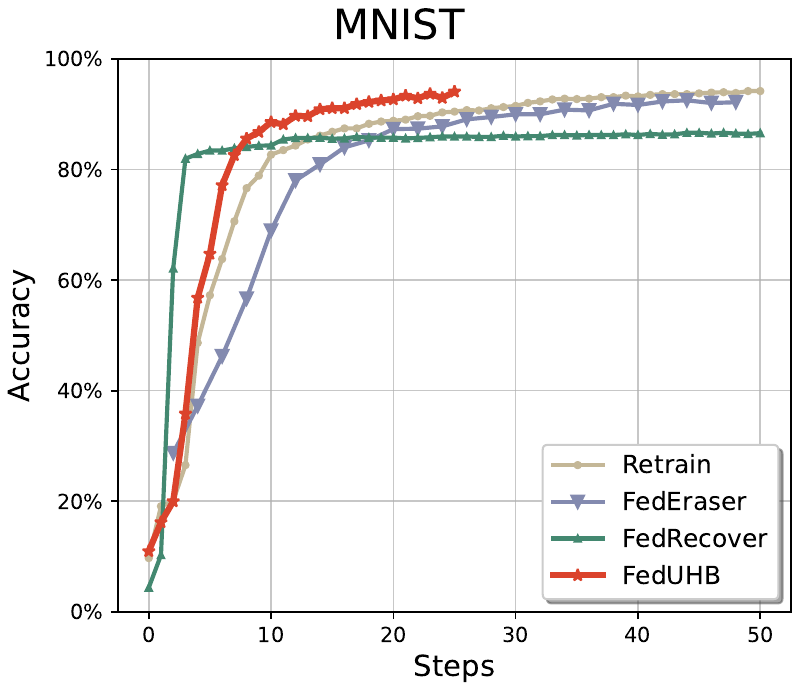} \\
            \label{fig:accuracy on mnist}
		\end{minipage}
	}
	\subfigure[Test accuracy on CIFAR-10]{
		\begin{minipage}{0.42\columnwidth}
			\includegraphics[width=\textwidth]{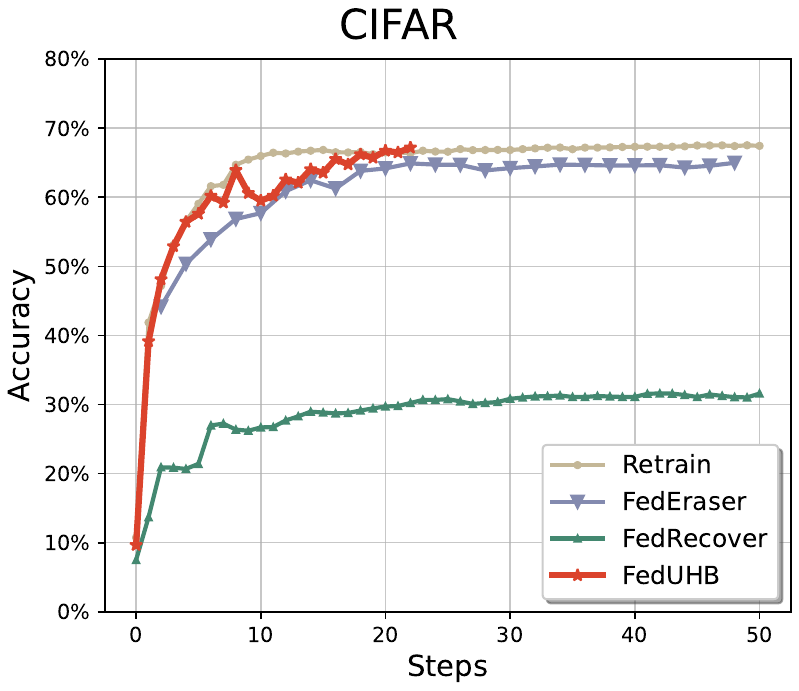}\\
			\label{fig:accuracy on cifar10}
		\end{minipage}
	}
	\caption{Test accuracy of the unlearning on MNIST and CIFAR-10}
    \label{fig:accuracy}
\end{figure}

\subsubsection{Accuracy on the test data}
We evaluate the test accuracy of four methods on two datasets, as shown in Figure \ref{fig:accuracy}. 
On the MNIST dataset, all four methods achieve satisfactory accuracy after unlearning, with FedUHB showing the best results. On the CIFAR-10 dataset, however, there are significant differences in unlearning accuracy. FedRecover performs poorly, achieving only around 30\% accuracy, while the other three methods also experience a decline. Despite experiencing slight fluctuations, FedUHB ultimately surpasses the other methods in accuracy. Thus, in terms of both speed and effectiveness, FedUHB demonstrates outstanding performance.




\begin{figure}[tbp]
	\centering
	\subfigure[MISR \& ASR on MNIST]{
		\begin{minipage}{0.44\columnwidth} 
            \includegraphics[width=\textwidth]{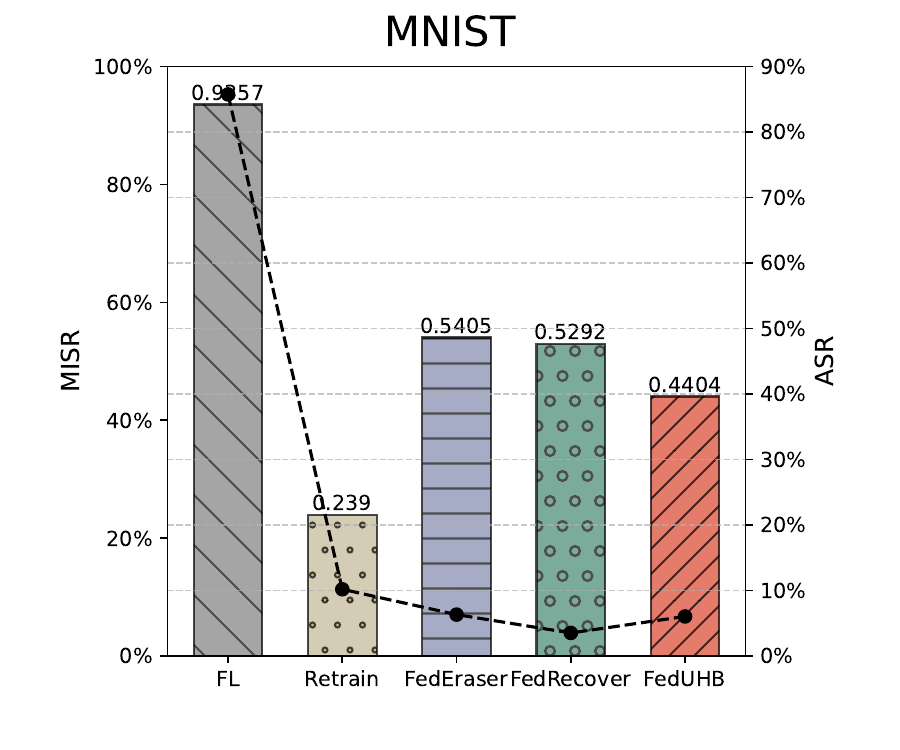} \\
            \label{fig:sr on mnist}
		\end{minipage}
	}
	\subfigure[MISR \& ASR on CIFAR-10]{
		\begin{minipage}{0.44\columnwidth}
			\includegraphics[width=\textwidth]{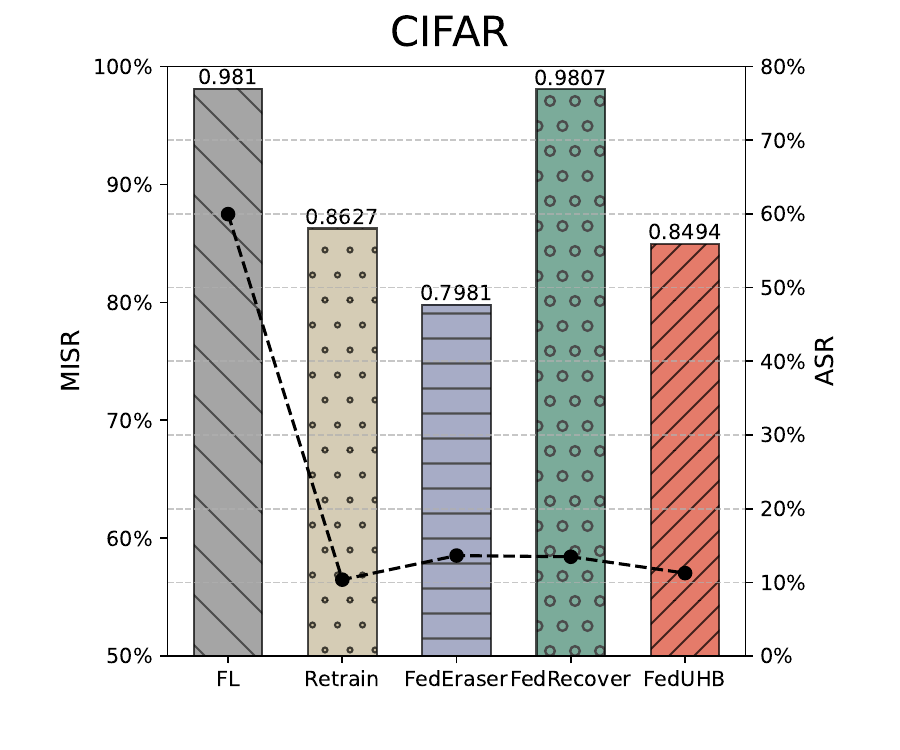}\\
			\label{fig:sr on cifar10}
		\end{minipage}
	}
	\caption{MISR (Bar) and ASR (Line) on MNIST and CIFAR-10}
    \label{fig:attack}
\end{figure}
\subsubsection{Performance after MIA and BA}
We assess the performance after MIA and BA to evaluate the effectiveness of unlearning, shown in Fig. \ref{fig:attack}. 
For MIAs on the MNIST dataset, the MISR of the retrain method is only 23.9\%, indicating a strong capability to effectively remove unlearned clients' data. Meanwhile, the MISRs of FedEraser and FedRecover are close to 50\%, demonstrating that these two methods also effectively mitigate the impact of unlearned data. Notably, FedUHB achieves a lower MISR compared to FedEraser and FedRecover, further highlighting its superior performance. On the CIFAR-10 dataset, while FedUHB's performance decreases slightly, it still surpasses that of the retrain method, underscoring its robustness.
For BAs on the MNIST dataset, all four methods successfully prevent attacks, with FedUHB maintaining an ASR of 3.35\%, indicating that the backdoor attack is largely ineffective and the unlearned data has been successfully removed. On CIFAR-10, the ASRs of FedUHB and the retrain method are lower than those of the other two methods, demonstrating better unlearning effectiveness compared to the others.

\begin{figure}[tbp]
	\centering
	\subfigure[Impact of momentum $\beta$]{
		\begin{minipage}{0.42\columnwidth} 
            \includegraphics[width=\textwidth]{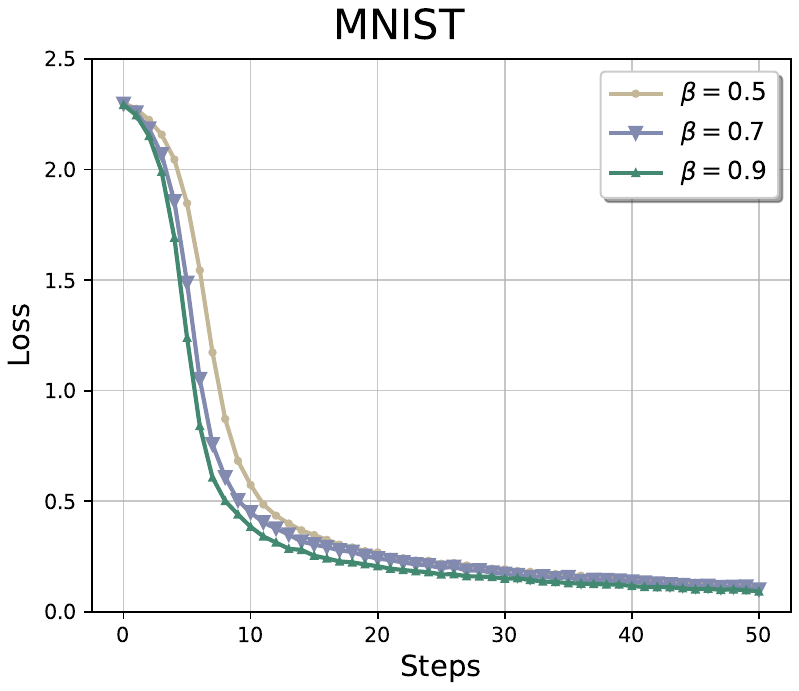} \\
            \label{fig:momentum}
		\end{minipage}
	}
	\subfigure[Impact of stopping criterion $\lambda$]{
		\begin{minipage}{0.44\columnwidth}
			\includegraphics[width=\textwidth]{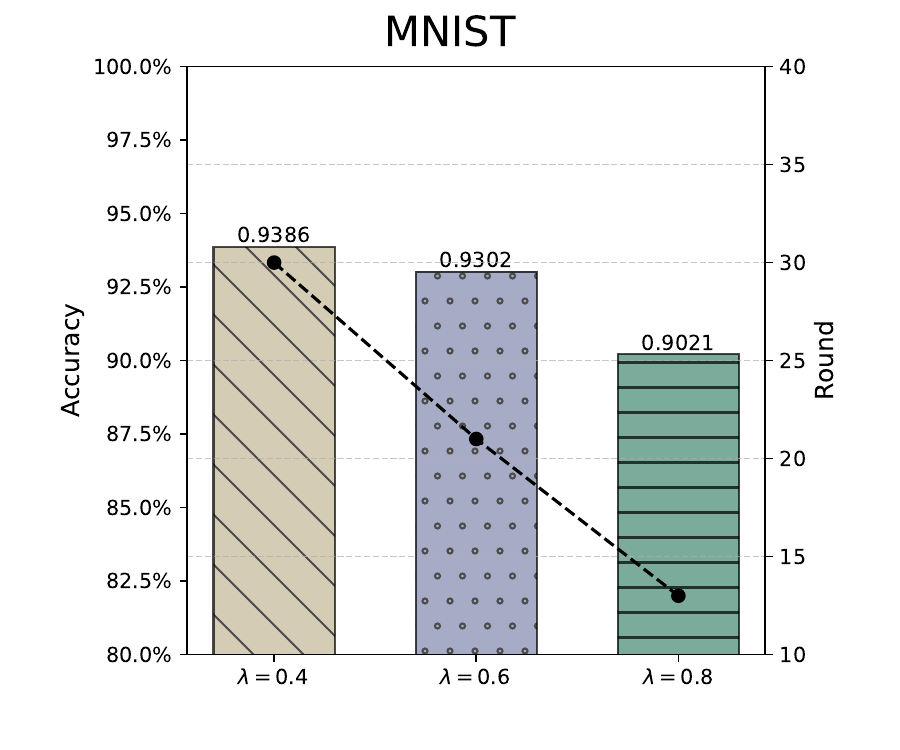}\\
			\label{fig:stop}
		\end{minipage}
	}
	\caption{Impact of parameters on MNIST}
    \label{fig:backdoor}
\end{figure}

\subsubsection{Discussion}
We discuss the impact of the hyperparameters $\beta$ and $\lambda$ on model performance. As seen in Fig. \ref{fig:momentum}, a higher momentum parameter $\beta$ accelerates convergence speed. Fig. \ref{fig:stop} illustrates that a larger stopping criterion parameter $\lambda$ results in stricter stopping conditions, fewer training rounds, and correspondingly lower accuracy. However, when $\lambda$ is set to 0.4, the stopping condition is more relaxed, allowing for more training rounds, specifically 30 rounds, and achieving an accuracy of 93.86\%. With $\lambda$ at 0.6, the model reaches an accuracy of 93.02\% in just 22 rounds. Selecting the appropriate $\lambda$ can save on communication and computational costs while ensuring satisfactory accuracy.

\section{Conclusion}
We introduce FedUHB, a novel exact unlearning method designed to achieve rapid retraining. By integrating Polyak heavy ball optimization techniques and a dynamic stopping mechanism, FedUHB achieves faster convergence and higher accuracy than conventional methods, enabling unlearning tasks to be completed in significantly less time. 
Extensive experiments show that FedUHB enhances unlearning effectiveness and speed, as evaluated through standard metrics and two types of attacks, highlighting its robustness compared to existing methods.
These findings demonstrate that FedUHB is a highly effective and efficient method for exact unlearning.

\section*{Acknowledgement}
This research / project is supported by the National Research Foundation, Singapore and Infocomm Media Development Authority under its Trust Tech Funding Initiative and the Singapore Ministry of Education Academic Research Fund (RG91/22 and NTU startup). Any opinions, findings and conclusions or recommendations expressed in this material are those of the author(s) and do not reflect the views of National Research Foundation, Singapore and Infocomm Media Development Authority.


\clearpage

\clearpage
\bibliographystyle{IEEEtran}
\bibliography{mybibliography}

\end{document}